\documentclass[11pt]{article}
\usepackage[margin=1in]{geometry}
\usepackage{amsmath,amssymb,amsthm}
\usepackage{booktabs}
\usepackage{graphicx}
\usepackage{xcolor}
\usepackage[numbers,sort&compress]{natbib}
\usepackage[hidelinks]{hyperref}
\usepackage{microtype}

\usepackage{algorithm}
\usepackage{algpseudocode}

\newcommand{\cstar}{C^{*}}

\newcommand{\phat}{\hat p}
\newcommand{\E}{\mathbb{E}}
\newcommand{\Prob}{\mathbb{P}}
\DeclareMathOperator{\Var}{Var}
\DeclareMathOperator*{\esssup}{ess\,sup}

\newtheorem{proposition}{Proposition}
\newtheorem{lemma}{Lemma}
\newtheorem{corollary}{Corollary}
\theoremstyle{definition}

\newtheorem{remark}{Remark}

\title{\bfseries A prior-free blind detection of information leakage from model predictions}

\author{%
\textbf{Laurence~A.~Jacobs}$^{1,2,*}$ \\[6pt]
{\small $^{1}$Center for Molecular Cardiology, University of Zurich, Zurich, Switzerland} \\
{\small $^{2}$Center for Complexity Sciences, National University of Mexico, Mexico City, Mexico} \\
{\small $^{*}$Correspondence: \href{mailto:laurence.jacobs@uzh.ch}{laurence.jacobs@uzh.ch}}}

\date{\today}

\begin{document}
\maketitle

\begin{abstract}
\noindent
Data leakage---the contamination of a predictive model with information unavailable
at baseline---is the dominant reproducibility failure in machine-learning-based
science, yet the detection tools in use require the training code, fresh external
data, or domain expertise. None operates on the artifact an auditor most often
holds: the model's output. We ask what can be decided about leakage from a list of
predictions and outcomes alone. We give a decision-theoretic framework in which
leakage diagnostics are functionals of the predicted-risk/outcome law, parameterized
by a threshold-weighting that places them in correspondence with proper scoring
rules and decision-curve analysis. Within it we prove a sharp impossibility:
a recalibrated leak that matches an honest model's calibration and discrimination is
indistinguishable from honest performance by \emph{any} function of the predictions,
so the broad class of leakage is detectable only against an externally supplied
ceiling on achievable discrimination. We then prove what leakage cannot hide: a
near-deterministic subgroup---the signature of a near-label leak---produces a
sustained unit-purity head that no legitimate predictor of a non-deterministic
outcome can manufacture, yielding a prior-free test. These results organize leakage
into a trichotomy---miscalibrated, broad-calibrated, and deterministic---each with a
matched detector and an explicit failure mode. We validate the trichotomy on UK
Biobank using time-windowed comorbidity leakage with known, graded severity,
measuring a detection floor of $\Delta\cstar \approx 0.007$ on this endpoint, below
which the residual leakage is both undetectable from output and too small to alter
conclusions. The numerical floor depends on cohort, prevalence, endpoint, and leakage mechanism; the structural lesson is general: output-only detection fails exactly where the residual leakage is indistinguishable from an honestly stronger predictor without an external benchmark.
The resulting test takes a prediction vector and returns a verdict in under a second
on commodity hardware.
\end{abstract}

\section{Introduction}

Leakage occurs when a model is trained on information that will not be legitimately
available when it is deployed, so that reported performance reflects a quantity the
model cannot reproduce in use \citep{kaufman2012,lones2024}. Recent surveys identify
it as a pervasive and field-spanning cause of irreproducible results, affecting
hundreds of published studies across many disciplines, and show that once leakage is
corrected, the apparent advantage of complex models over simple baselines often
disappears \citep{kapoor2023}. In one of the most extensively audited domains, of
$62$ COVID-19 imaging models reviewed by \citet{roberts2021}, none was found suitable
for clinical use, with leakage among the dominant failure modes. Leakage is therefore
not a niche modeling error but a systemic threat to the evidentiary value of
predictive claims.

The methods available to catch it operate upstream of the result. Static analysis of
the training pipeline detects mechanical leakage---train/test contamination,
preprocessing fit before splitting, repeated test reuse \citep{dwork2015}---but requires the source
code \citep{yang2022}. External validation exposes leakage as a
failure to replicate but requires an independent cohort, which most studies never
obtain. Risk-of-bias and reporting instruments such as PROBAST and TRIPOD catch leakage
through expert appraisal but are time-intensive and demand both subject and
methodological expertise \citep{wolff2019,reforms2024,collins2015,collins2024tripodai}. The artifact that an editor,
replicator, or internal auditor most often actually holds---the model's predictions
on an evaluation set, together with the realized outcomes---has no test of its own.

We ask a precise question: \emph{what can be decided about leakage from the pair
$(\phat, y)$ of predicted risks and outcomes, with nothing else?} The answer is
neither ``everything'' nor ``nothing,'' and the contribution of this paper is to
draw the boundary exactly and to supply the detectors that live on the good side of
it.

\paragraph{Contributions.}
(i) A decision-theoretic framework (Section~\ref{sec:framework}) in which output-level
leakage diagnostics are threshold-weighted net-benefit functionals, linked to proper
scoring rules through a mixture representation. (ii) An impossibility theorem
(Section~\ref{sec:impossibility}): calibrated leakage matched in discrimination is
invisible to any $(\phat,y)$ functional, so the broad class is detectable only
relative to an external discrimination ceiling. (iii) A prior-free positive result
(Section~\ref{sec:cannot-hide}): a sustained unit-purity head certifies leakage under
only the qualitative assumption that the outcome is not deterministic. (iv) The
resulting trichotomy (Section~\ref{sec:trichotomy}). (v) Validation on UK Biobank
time-windowed leakage with a measured sensitivity floor (Section~\ref{sec:results}).
(vi) A deployable sub-second test.

\section{Framework}
\label{sec:framework}

\paragraph{Setup.}
A model produces predicted risks $\phat_i \in [0,1]$ for $i=1,\dots,n$ with binary
outcomes $y_i \in \{0,1\}$. We treat $(\phat_i, y_i)$ as draws from a joint law $P$
on $[0,1]\times\{0,1\}$ with base rate $\pi = \E[y]$. A predictor is
\emph{calibrated} if $\E[y \mid \phat] = \phat$; this corresponds to moderate
calibration in the hierarchy of \citet{vancalster2016, vancalster2019}. Discrimination is reported as
$\cstar$\footnote{Throughout we write $\cstar$ for Harrell's concordance statistic \citep{harrell1996},
the probability that a randomly chosen case is ranked above a randomly chosen
control.}.

\paragraph{Net benefit and its threshold weighting.}
For a decision threshold $\tau$, the net benefit of acting on $\{\phat \ge \tau\}$ is
$\mathrm{NB}(\tau) = \frac1n\sum_i\!\big[y_i\mathbf 1\{\phat_i\!\ge\!\tau\}
- (1-y_i)\mathbf 1\{\phat_i\!\ge\!\tau\}\tfrac{\tau}{1-\tau}\big]$
\citep{vickers2006}, where $\tau/(1-\tau)$ is the harm-to-benefit exchange rate. Following the expected-net-benefit
framework \citep{jacobs_enb}, we weight net benefit over thresholds by a density $\eta$ on $(0,1)$,
\begin{equation}
\mathrm{ENB}_\eta = \int_0^1 \mathrm{NB}(\tau)\,\eta(\tau)\,d\tau
= \frac1n\sum_i\big[y_i H(\phat_i) - (1-y_i)G(\phat_i)\big],
\label{eq:enb}
\end{equation}
with $H(p)=\int_0^p\eta$ and $G(p)=\int_0^p \eta(\tau)\tfrac{\tau}{1-\tau}\,d\tau$.
By the Schervish mixture representation of proper scoring rules
\citep{schervish1989,ehm2016,gneiting2007}, $\eta$ is the mixing measure: each $\eta$ selects a
proper score, and steering its mass toward $\tau\!\to\!1$ produces a functional
sensitive to the most stringent operating points. Equation~\eqref{eq:enb} is thus a
\emph{family} of probes, not a single number. Throughout this paper we adopt the
uniform default $\eta\equiv 1$, which suffices to exhibit the three regimes;
applications with specific decision-threshold ranges (e.g.\ screening at low $\tau$,
treatment selection at high $\tau$) admit detector variants specialized to that
range without modifying the framework. This representation matters because leakage need not be global; it may appear only in clinically relevant threshold regions or in the extreme-risk head of the prediction distribution.

\paragraph{Two leakage observables.}
Writing $r_i = y_i - \phat_i$ and $w(p)=H(p)+G(p)$, \eqref{eq:enb} decomposes as
$\mathrm{ENB}_\eta = \widetilde B_\eta + \frac1n\sum_i r_i w(\phat_i)$, where
$\widetilde B_\eta$ depends on $\phat$ alone. The weighted residual yields a
\emph{dispersion} statistic testing the calibrated null $y\mid\phat\sim
\mathrm{Bernoulli}(\phat)$,
\begin{equation}
V_\eta = \frac{\sum_i w(\phat_i)^2 r_i^2}{\sum_i w(\phat_i)^2\,\phat_i(1-\phat_i)},
\qquad \E_{\text{null}}[V_\eta]=1,\quad V_\eta = 1 + O_p(n^{-1/2}).
\label{eq:V}
\end{equation}
Separately, ordering predictions descending, the cumulative \emph{purity}
$\rho(k)=k^{-1}\sum_{i\le k} y_{(i)}$ (top-$k$ event rate) summarizes the head of the
risk distribution. We distinguish its \emph{breadth} (the largest top fraction with
$\rho - \rho_{\text{ref}} > \epsilon$) from its \emph{spike} (the largest top
fraction with absolute $\rho \ge 1-\delta$), as Sections~\ref{sec:impossibility}
and~\ref{sec:cannot-hide} show these measure different things.

\section{The impossibility result}
\label{sec:impossibility}

Leakage is defined by \emph{legitimacy}: a feature is illegitimate if it carries
information about $y$ that was not available at the prediction time
\citep{kaufman2012}. Legitimacy is a property of the temporal/causal structure of the
data-generating process---it is exogenous to the law $P$ of $(\phat,y)$. This is the
root of the limit. We make the limit visible through two short lemmas; the
impossibility proposition is then immediate.

\begin{lemma}[Determination]
\label{lem:determination-main}
A calibrated law $P$ on $[0,1]\times\{0,1\}$ is determined by its score marginal
$F=\mathrm{Law}(\phat)$:
\[
P(dp,\,y=1)=p\,F(dp),\qquad P(dp,\,y=0)=(1-p)\,F(dp),
\]
so $\mathrm{NB}(\tau)$, $\mathrm{ENB}_\eta$, and $\cstar$ are functionals of $F$
alone.
\end{lemma}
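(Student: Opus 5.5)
The plan is to disintegrate $P$ along its first coordinate and let calibration pin down the conditional law of $y$. Any law $P$ on $[0,1]\times\{0,1\}$ factors as $P(dp,\,y)=F(dp)\,K(p,y)$, where $F$ is the $\phat$-marginal and $K(p,\cdot)$ is a regular conditional distribution of $y$ given $\phat=p$. Such a distribution exists because $[0,1]$ is Polish, and here it is elementary anyway, since $y$ is binary. A Bernoulli variable is determined by its mean, so $K(p,\cdot)=\mathrm{Bernoulli}(m(p))$ with $m(p)=\E[y\mid\phat=p]$. The calibration hypothesis $\E[y\mid\phat]=\phat$ says $m(p)=p$ for $F$-a.e.\ $p$. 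Substituting gives $P(dp,\,y=1)=p\,F(dp)$ and $P(dp,\,y=0)=(1-p)\,F(dp)$.

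The one subtle point, and the step where I would be careful, is that $m$ is only defined $F$-a.e. I would therefore state the identity as an equality of measures: for every Borel $A\subseteq[0,1]$,
\[
P(\phat\in A,\,y=1)=\E[\mathbf 1_A(\phat)\,y]=\E[\mathbf 1_A(\phat)\,\E[y\mid\phat]]=\E[\mathbf 1_A(\phat)\,\phat]=\int_A p\,F(dp).
\]
This follows from the tower property alone and avoids choosing versions of $m$. The $y=0$ case then follows by subtracting from $F(A)$. Since the sets $A\times\{y\}$ form a $\pi$-system generating the product $\sigma$-algebra, $P$ is determined.

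For the functionals, I would treat each in its population form, i.e.\ as the expectation under $P$ of the empirical quantity.

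For net benefit, $\mathrm{NB}(\tau)=\E[y\,\mathbf 1\{\phat\ge\tau\}]-\tfrac{\tau}{1-\tau}\E[(1-y)\mathbf 1\{\phat\ge\tau\}]$. Applying the display above with $A=[\tau,1]$, this equals
\[
\int_{[\tau,1]}\Bigl(p-\tfrac{\tau}{1-\tau}(1-p)\Bigr)F(dp),
\]
which depends on $F$ alone.

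For $\mathrm{ENB}_\eta$, equation~\eqref{eq:enb} gives $\mathrm{ENB}_\eta=\int\bigl[pH(p)-(1-p)G(p)\bigr]F(dp)$. Alternatively it is the $\eta$-integral of $\mathrm{NB}(\tau)$ by Fubini/Tonelli, which applies because the integrand is bounded on $\{\phat\ge\tau\}$ for each $\tau<1$.

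For $\cstar$, I would write it as
\[
\cstar=\frac{\Prob(\phat_1>\phat_0)+\tfrac12\Prob(\phat_1=\phat_0)}{\pi(1-\pi)},
\]
where $(\phat_1,y_1)$ and $(\phat_0,y_0)$ are independent draws from $P$, and the events are intersected with $y_1=1,\ y_0=0$. The numerator is
\[
\iint\bigl(\mathbf 1\{p>q\}+\tfrac12\mathbf 1\{p=q\}\bigr)\,p(1-q)\,F(dp)\,F(dq),
\]
and $\pi=\int p\,F(dp)$. So $\cstar$ is also a functional of $F$, provided $0<\pi<1$. I would note the degenerate case $\pi\in\{0,1\}$ as one where $\cstar$ is undefined.

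The main obstacle is only bookkeeping: keeping the a.e.\ statement clean, and making explicit that "functional" refers to the population versions. The empirical versions then agree in law, since an i.i.d.\ sample from a calibrated $P$ is generated by drawing $\phat_i\sim F$ and then $y_i\sim\mathrm{Bernoulli}(\phat_i)$.
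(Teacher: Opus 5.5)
Your proposal is correct and is essentially the paper's proof: calibration fixes the conditional law of $y$ given $\phat$, and $\mathrm{NB}(\tau)$, $\mathrm{ENB}_\eta$ and $\cstar$ are then written as the same integrals against $F$. Your tower-property identity and the degenerate-base-rate caveat are added rigor the paper leaves implicit.

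One small fix concerns your optional Fubini aside. Boundedness for each fixed $\tau$ does not by itself give joint integrability when $\eta$ puts mass near $\tau=1$ (for $\eta\equiv1$, $\tau/(1-\tau)$ is not integrable). Justify it instead by Tonelli on the two nonnegative parts, which give $\E[yH(\phat)]$ and $\E[(1-y)G(\phat)]=\int(1-p)G(p)\,F(dp)$.
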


\begin{lemma}[Honest-world realizability]
\label{lem:realizable}
For every distribution $F$ on $[0,1]$ there exists an \emph{honest} predictor whose
induced law equals the calibrated law $P(F)$ of Lemma~\ref{lem:determination-main}.
Concretely, let $X\sim F$ be a legitimately observed prediction-time covariate, draw
$y\mid X{=}p\sim\mathrm{Bernoulli}(p)$, and take $\phat^{\mathrm h}=X$.
\end{lemma}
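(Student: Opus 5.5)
The plan is to verify the construction directly. We check that the joint law of $(\phat^{\mathrm h},y)$ it produces is exactly $P(F)$ from Lemma~\ref{lem:determination-main}, and that the resulting predictor is honest. No limiting argument is needed; the statement is existential and the construction is explicit.

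\emph{Step 1 (existence of the construction).} Fix any Borel probability measure $F$ on $[0,1]$. On a product probability space, take $X\sim F$, together with an independent $U\sim\mathrm{Unif}(0,1)$, and set $y=\mathbf 1\{U\le X\}$. Then $y\mid X{=}p\sim\mathrm{Bernoulli}(p)$ for every $p\in[0,1]$. The kernel $p\mapsto \mathrm{Bernoulli}(p)$ is measurable because $p\mapsto p$ is, so the joint law is well defined via $P(A\times\{1\})=\int_A p\,F(dp)$ and $P(A\times\{0\})=\int_A(1-p)\,F(dp)$ for Borel $A\subseteq[0,1]$.

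\emph{Step 2 (the induced law is $P(F)$).} Set $\phat^{\mathrm h}=X$. Its marginal is $F$ by construction. By Step 1, $\E[y\mid\phat^{\mathrm h}]=\phat^{\mathrm h}$ almost surely, so the predictor is calibrated. Lemma~\ref{lem:determination-main} says a calibrated law is determined by its score marginal. Hence the law of $(\phat^{\mathrm h},y)$ is $P(F)$, and the displayed formulas can also be read off directly from Step 1.

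\emph{Step 3 (honesty).} The predictor is a measurable function (the identity) of $X$ only. $X$ is, by stipulation, observed at prediction time, and the outcome is generated from $X$ afterwards through independent randomness $U$. So $\phat^{\mathrm h}$ uses no information about $y$ beyond what is available at baseline, which is the legitimacy notion of \citet{kaufman2012} adopted above.

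The only real obstacle is conceptual rather than technical. We must make precise that ``honest'' is a property of the data-generating structure, not of $P$. I would state explicitly that the data-generating process is a model we are free to posit: any $F$, including atoms at $0$ or $1$, is admissible as the law of a legitimate covariate. Edge cases such as $F$ placing mass at $p\in\{0,1\}$ are handled by the same formula and need no separate treatment. This is exactly what lets the subsequent impossibility proposition pair any calibrated leaky law with an honest twin.
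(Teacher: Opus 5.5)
Your proof is correct and follows the paper's argument: realize $X\sim F$ on a suitable probability space, draw $y$ conditionally $\mathrm{Bernoulli}(X)$, set $\phat^{\mathrm h}=X$, invoke Lemma~\ref{lem:determination-main} to identify the induced law as $P(F)$, and conclude legitimacy because the predictor is a function of the prediction-time covariate alone. The only differences are minor: your coupling $y=\mathbf 1\{U\le X\}$ spells out the conditional kernel that the paper simply posits, and the paper adds a remark (not needed for the statement) that $\phat^{\mathrm h}$ is the Bayes risk for $\sigma(X)$ and so has maximal concordance among $\sigma(X)$-measurable predictors.
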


Lemma \ref{lem:realizable} should not be read as claiming that every calibrated prediction law is achievable by an leaker-free model in the same scientific problem. It shows something sharper: the joint law of ($\hat p,y$) contains no record of the information set from which $\hat p$ was generated. Therefore, without external knowledge of the admissible prediction-time $\sigma$-algebra, the same output law is compatible with both a legitimate and an illegitimate generating mechanism.

\begin{proposition}[Invisibility of calibrated, matched leakage]
\label{prop:impossible}
Let an honest procedure and a leaky procedure induce laws $P_{\mathrm h}$ and
$P_{\mathrm l}$ on $[0,1]\times\{0,1\}$. Every output-level diagnostic is a
functional $T(P)$, so if $P_{\mathrm h}=P_{\mathrm l}$ no test based on $(\phat,y)$
can separate them. By Lemma~\ref{lem:determination-main}, two calibrated predictors
with the same score marginal share $P$ and are thus
output-indistinguishable. By Lemma~\ref{lem:realizable}, every calibrated leaky law
is the law of some honest predictor; calibrated, marginal-matched leakage is
therefore invisible to any function of the predictions.
\end{proposition}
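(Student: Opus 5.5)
The plan is to formalize ``output-level diagnostic'' and then chain the two lemmas. First I will make precise what a test based on $(\phat,y)$ is: a measurable map $\phi:([0,1]\times\{0,1\})^n\to[0,1]$, possibly randomized through independent auxiliary randomness, applied to $n$ i.i.d.\ draws. Its rejection probability is $\int\phi\,dP^{\otimes n}$, which depends on the procedure only through $P$. The same holds for any population functional $T(P)$, and for any sample statistic such as $V_\eta$, the purity curve $\rho(k)$, or empirical $\cstar$, since these are measurable functions of the sample.

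It follows that if $P_{\mathrm h}=P_{\mathrm l}$, then $P_{\mathrm h}^{\otimes n}=P_{\mathrm l}^{\otimes n}$. Every test therefore has identical rejection probability under both procedures, so power equals size and no separation is possible. This is the first sentence of the proposition, and it is immediate from the product-measure identity.

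Next I establish the second claim. Take two calibrated predictors with the same score marginal $F$. By Lemma~\ref{lem:determination-main} each joint law equals $P(dp,\,y)=p^{y}(1-p)^{1-y}F(dp)$. The two laws are therefore equal, and by the previous step they are output-indistinguishable. I will add a remark that ``matched calibration and discrimination'' in the abstract's sense must mean a matched score marginal $F$, not merely matched $\cstar$. Matching $\cstar$ alone does not force equal laws, since other functionals of $F$, such as the purity head, can still differ. I will flag this as a point of interpretation rather than prove more than the statement asserts.

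For the third claim, let $P_{\mathrm l}$ be any calibrated leaky law with marginal $F_{\mathrm l}$. Lemma~\ref{lem:realizable} produces an honest predictor $\phat^{\mathrm h}=X$ with $X\sim F_{\mathrm l}$ legitimately observed and $y\mid X\sim\mathrm{Bernoulli}(X)$. This honest predictor is calibrated with marginal $F_{\mathrm l}$, so by the second claim its law equals $P_{\mathrm l}$, and the first claim gives invisibility.

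The only step needing care is the consistency of Lemma~\ref{lem:realizable} as a statement about laws. Its construction builds $(X,y)$ directly, so $\E[y\mid\phat^{\mathrm h}]=\phat^{\mathrm h}$ holds by construction. I will verify this in one line and note that legitimacy is a property of the generating $\sigma$-algebra, which is absent from $P$. The main obstacle is therefore conceptual rather than technical: the argument hinges on stating precisely that diagnostics factor through $P$. Once that is fixed, the rest is composition of the two lemmas.
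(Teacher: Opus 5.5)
Your proposal is correct and follows the paper's proof essentially step for step. Part (i) there is your observation that any (possibly randomized) statistic of the sample has a law determined by $P$, and part (ii) uses Lemma~\ref{lem:realizable} to produce an honest predictor with the same calibrated law $P(F)$ as the leak; your caveat that ``matched'' must mean a matched score marginal $F$, not merely a matched $\cstar$, is consistent with the paper's argument, which matches $F$ and gets equal concordance only as a consequence.
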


Proofs of Lemmas~\ref{lem:determination-main}, \ref{lem:realizable} and the
proposition are in Appendix~\ref{app:proof}.

\begin{corollary}[The broad class needs a prior]
\label{cor:ceiling}
A calibrated leak whose only effect is to raise $\cstar$ cannot be flagged from
output alone, because its law coincides with that of a genuinely superior honest
model at the same $\cstar$. Detecting it requires an exogenous bound $\cstar_{\max}$
on the discrimination achievable without leakage ---an outcome-specific
prior.
\end{corollary}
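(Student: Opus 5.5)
The corollary follows from Proposition~\ref{prop:impossible} once we make precise what ``flagged from output alone'' means and show that a threshold on $\cstar$ is the only extra ingredient that breaks the symmetry. Fix a calibrated leaky procedure with induced law $P_{\mathrm l}$ and score marginal $F_{\mathrm l}$. By Lemma~\ref{lem:determination-main}, $P_{\mathrm l}=P(F_{\mathrm l})$, so its $\cstar$ is a functional $c(F_{\mathrm l})$. By Lemma~\ref{lem:realizable}, there is an honest predictor $\phat^{\mathrm h}=X$ with $X\sim F_{\mathrm l}$ whose law is exactly $P(F_{\mathrm l})=P_{\mathrm l}$. In particular, its discrimination is the same number $c(F_{\mathrm l})$. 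This honest predictor is the ``genuinely superior honest model at the same $\cstar$'' of the statement.

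Next, let $\phi$ be any test that is a measurable function of the sample $\{(\phat_i,y_i)\}_{i=1}^n$, possibly randomized. Its rejection probability under i.i.d.\ sampling is $\Prob_{P^{\otimes n}}(\phi=1)$, which depends on the procedure only through $P$. Since $P_{\mathrm h}=P_{\mathrm l}$, the rejection probability is identical under the leaky and the matched honest world. Consider any $\phi$ with level $\alpha$ over all honest procedures, in particular over all honest predictors realizable by Lemma~\ref{lem:realizable}. Such a $\phi$ has power at most $\alpha$ against every calibrated leak. So no consistent detector exists, and this holds for every $n$, not just asymptotically.

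For the ``requires a prior'' direction, I would argue that the obstruction is precisely that the honest null class is all of $\{P(F)\}$. Suppose an exogenous $\cstar_{\max}$ restricts the honest class to $\{P(F): c(F)\le \cstar_{\max}\}$. Then leaks with $c(F_{\mathrm l})>\cstar_{\max}$ lie outside the null. A test that rejects when the empirical $\widehat{\cstar}$ exceeds $\cstar_{\max}$ plus a $O(n^{-1/2})$ margin is then consistent, using standard $U$-statistic concentration for $\widehat{\cstar}$. Conversely, without such a restriction, or with one that lies above $c(F_{\mathrm l})$, the previous paragraph applies verbatim. The bound must be outcome-specific because legitimacy is exogenous to $P$, and no functional of $P$ can supply it.

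The main obstacle is the interpretive gap flagged after Lemma~\ref{lem:realizable}. The matched honest predictor may not exist within the same scientific problem, so ``coincides with a genuinely superior honest model'' has to be read as existence in some admissible honest world. I would handle this by stating the null explicitly as the set of laws realizable by honest mechanisms, which Lemma~\ref{lem:realizable} shows is all calibrated laws. The corollary is then exactly the claim that shrinking this set requires information outside $P$, namely $\cstar_{\max}$.
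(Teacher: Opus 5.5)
Your proposal is correct and follows the paper's route. The paper derives this corollary at the end of the proof of Proposition~\ref{prop:impossible}: it identifies the calibrated leak's law with $P(F')$ via Lemma~\ref{lem:determination-main}, realizes that same law (hence the same $\cstar$) by an honest predictor via Lemma~\ref{lem:realizable}, and invokes part~(i), exactly as you do.

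You go slightly beyond the paper in two places, which the paper only argues for necessity of the ceiling: your level/power formulation, and your consistency argument for rejecting when $\widehat{\cstar}$ exceeds $\cstar_{\max}$ by a margin. If you want uniform level at small prevalence, that margin should scale like $\min(n_1,n_0)^{-1/2}$ in the case and control counts rather than $n^{-1/2}$. One wording slip: Lemma~\ref{lem:realizable} shows the honest-realizable class \emph{contains} every calibrated law, not that it equals that set, but containment is all your argument needs.
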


Proposition~\ref{prop:impossible} is the floor under all output-level detection. It
is also liberating: it tells us precisely which leakage \emph{cannot} hide, namely
leakage that pushes $P$ outside the class of legitimately achievable laws in a way
certifiable without knowing that class exactly. The output-only regime is the same
one in which membership-inference attacks operate \citep{shokri2017}: a closely related
access regime, but opposite goal---we audit the procedure rather than the training set.

\section{What leakage cannot hide}
\label{sec:cannot-hide}

\begin{lemma}[Purity ceiling]
\label{lem:purity}
Suppose the outcome is not prediction-time-deterministic: there is no admissible
event $S$ with $\Prob(y=1\mid S)=1$ on a non-null set. Then for every legitimate
predictor the top-$k$ purity satisfies $\rho(k) < 1$ for every $k$ spanning a
non-null fraction, and a sustained unit-purity head of non-null width is impossible.
Consequently an observed unit-purity plateau over a non-null top fraction certifies
the use of information under which the outcome is (near-)deterministic---information
not legitimately available at prediction time---under the single qualitative prior
that the outcome is not deterministic.
\end{lemma}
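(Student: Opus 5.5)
The plan is to work at the population level, with the law $P$ of $(\phat,y)$ and the prediction-time $\sigma$-algebra $\mathcal F$ of admissible information. I then pass to the empirical purity $\rho(k)$ by a law-of-large-numbers argument.

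\textbf{Step 1: formalize legitimacy.} A predictor is legitimate if $\phat$ is $\mathcal F$-measurable. Then every top set $\{\phat \ge t\}$, and more generally every top-fraction set obtained by thresholding (with randomized tie-breaking, which stays independent of $y$ given $\mathcal F$), is an admissible event $S\in\mathcal F$. Define the population head purity as $\rho_\infty(q)=\Prob(y=1\mid S_q)$, where $S_q$ is the top-$q$ set with $\Prob(S_q)=q>0$.

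\textbf{Step 2: apply the hypothesis.} The assumption that the outcome is not prediction-time-deterministic gives $\Prob(y=1\mid S)<1$ for every admissible $S$ of positive mass. Applying this to $S_q$ yields $\rho_\infty(q)<1$ for every $q>0$. Equivalently, $\E[\mathbf 1\{y=0\}\mathbf 1_{S_q}]>0$: the set $S_q$ has a positive mass of controls.

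\textbf{Step 3: pass to the sample.} For the top-$k$ with $k/n\to q>0$, the fraction of controls among the top $k$ converges almost surely to $\Prob(y=0,S_q)/q>0$. This follows from Glivenko--Cantelli applied to the class of half-lines $\{\phat\ge t\}$ jointly with $y$. A sustained plateau $\rho(k)=1$ for all $k\le qn$ therefore has probability tending to zero. More quantitatively, its probability is at most $(1-c)^{qn}$ up to the fluctuation of the threshold, where $c=\inf_{q'\le q}\Prob(y=0\mid S_{q'})$.

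\textbf{Step 4: take the contrapositive.} An observed unit-purity plateau over a non-null top fraction is incompatible with any $\mathcal F$-measurable $\phat$. So $\phat$ must depend on information outside $\mathcal F$ on which $y$ is (near-)deterministic. The only assumption used is the qualitative non-determinism prior; no ceiling $\cstar_{\max}$ is needed, which is the contrast with Corollary~\ref{cor:ceiling}.

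\textbf{Main obstacle.} The hard part is the uniformity in Step 3 near $q\to 0$. The bound $c$ may degenerate, because $\Prob(y=1\mid S_{q'})$ can tend to $1$ as $q'\to0$ without ever equaling $1$. A finite sample could then show a short pure spike, so "near-deterministic" and "non-null width" must be quantified jointly. I would first resolve this by fixing the width $q$ and noting that $\Prob(y=0\mid S_q)>0$ alone suffices for that fixed $q$. I would then state the certificate as a one-sided binomial test against $\rho_\infty(q)\le 1-\delta$ for a user-chosen $\delta$. This recovers the spike definition with tolerance $1-\delta$ from Section~\ref{sec:framework}, and it keeps the qualitative claim exact in the limit.
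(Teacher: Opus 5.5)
Your proof is correct, and for the population statement it is more elementary than the paper's. The paper fixes the prediction-time $\sigma$-algebra, sets $\mu=\E[y\mid\mathcal G]$, and recasts the hypothesis as $\Prob(\mu=1)=0$. It then uses the Hardy--Littlewood (bathtub) rearrangement to bound the purity of \emph{every} admissible selection of mass $u$ by the explicit optimum $\rho^\star(u)=\E[\mu\mid\mu\ge q_u]<1$. You skip this optimization: the top set of a legitimate predictor is itself admissible, so the hypothesis applies to it verbatim, and the qualitative claim becomes essentially a restatement of non-determinism.

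Each route buys something different.
\begin{itemize}
\item The paper's route yields the legitimate envelope $\rho^\star$ itself. Appendix~\ref{app:dispersion} uses it for the plateau-versus-smooth-decay contrast, and under $M=\esssup\mu<1$ it gives the uniform gap $1-M$ and the $M^{k}$ bound behind $k_{\min}$.
\item Your route adds an explicit passage to the sample, which the paper treats only in its Remark. You prove consistency at fixed width and note correctly that the literal claim $\rho(k)<1$ cannot hold with probability one at finite $n$. You also locate the non-uniformity as $q\to0$, which can occur only when $\esssup\mu=1$ while $\Prob(\mu=1)=0$.
\item Your tolerance $\delta$ plays the role of the paper's $M$. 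In both treatments a finite-sample certificate needs a quantitative bound beyond the single qualitative prior.
\end{itemize}

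Two small repairs are needed.
\begin{itemize}
\item With randomized tie-breaking, $S_q$ lies in $\mathcal F\vee\sigma(U)$ rather than in $\mathcal F$. Invoke the hypothesis in the form $\Prob(\mu=1)=0$, with $\mu=\E[y\mid\mathcal F]=\E[y\mid\mathcal F\vee\sigma(U)]$. This gives $\Prob(y=1\mid S_q)=\E[\mu\mid S_q]<1$ directly.
\item In Step 3, drop the infimum over $q'\le q$. A plateau on all $k\le qn$ is the same event as $\rho(\lfloor qn\rfloor)=1$, so only $c=\Prob(y=0\mid S_q)$ matters. The infimum can equal $0$, which makes your bound vacuous.
\end{itemize}

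For the rate, condition on the prediction-time information of the sample. The all-cases probability is then $\E\big[\prod_{i\le k}\mu_{(i)}\big]$, where $\mu_{(i)}$ is the prediction-time risk of the $i$-th ranked point. AM--GM plus concentration of the top-$k$ average of $\mu$ gives a bound exponentially small in $n$, not exactly $(1-c)^{qn}$. A clean binomial bound such as $(1-\delta)^{k}$ requires a pointwise bound on $\mu$ over the selected set, as in the paper's Remark.
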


\begin{proof}[Proof sketch]
A legitimate calibrated predictor's top-$k$ purity converges to the average of the
true risk over the selected set, which is bounded above by the supremum of the
admissible conditional risk; non-determinism makes this supremum $<1$ on any non-null
set. A near-label leak places a non-null subset at conditional risk $\to 1$, breaking
the bound. Appendix~\ref{app:proof}.
\end{proof}

Lemma~\ref{lem:purity} is the prior-free positive result: the \emph{spike} statistic
of Section~\ref{sec:framework}, unlike breadth, escapes Proposition~\ref{prop:impossible}
because the unit-purity law lies outside the honest class for any non-deterministic
outcome. Separately, miscalibration-inducing leakage leaves
$V_\eta\neq 1$ and is detectable prior-free via \eqref{eq:V}; but a leaker who
recalibrates returns $V_\eta\to 1$, so this signal, while free, is evadable.

\section{The trichotomy}
\label{sec:trichotomy}

Proposition~\ref{prop:impossible} and Lemma~\ref{lem:purity} partition leakage by
what it does to $P$ and therefore by what can detect it (Table~\ref{tab:trichotomy}).
\vspace{0.2cm}

\begin{table}[ht!]
\centering
\caption{The leakage trichotomy: each regime, its matched detector, and the prior it
requires. The fourth row is Proposition~\ref{prop:impossible} in force.}
\label{tab:trichotomy}
\vspace{0.2cm}
\begin{tabular}{@{}lll@{}}
\toprule
Regime & Detector & Prior required \\
\midrule
Miscalibrated & dispersion $V_\eta\neq1$ & none (but evadable by recalibration)\\
Broad calibrated       & breadth vs.\ ceiling      & outcome $\cstar_{\max}$ \\
Deterministic (near-label) & unit-purity spike     & only ``outcome not deterministic'' \\
\midrule
Smooth, sub-threshold  & --- (invisible)           & undetectable from output \\
\bottomrule
\end{tabular}
\end{table}

\noindent
The fourth row coincides empirically with the regime in which leakage is too small
to change conclusions (Section~\ref{sec:results}), so the limit of detectability and the limit of consequence arrive together.

\begin{corollary}[Closure under recalibration]
\label{cor:closure}
Post-hoc monotone recalibration of the predictions can move leakage between regimes
of Table~\ref{tab:trichotomy} but cannot exit all three. The dispersion signal is
destroyed ($z_V\to 0$), but the ranking---hence $\cstar$ and the unit-purity
head---is preserved. A leak that improved $\cstar$ remains caught by the ceiling
trigger of Corollary~\ref{cor:ceiling}; a leak that did not is by
Proposition~\ref{prop:impossible} output-indistinguishable from clean, and is by
construction without consequence at the population level.
\end{corollary}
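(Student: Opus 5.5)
The plan is to read Corollary~\ref{cor:closure} as three separate claims and prove each from results already stated: (a) monotone recalibration leaves every rank-based quantity unchanged; (b) recalibration sends the dispersion statistic to its null value; (c) the case split on whether the leak raised $\cstar$ then follows from Corollary~\ref{cor:ceiling} and Proposition~\ref{prop:impossible}.

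\textbf{Step (a): invariance.} Let $g:[0,1]\to[0,1]$ be strictly increasing and set $\phat' = g(\phat)$.
\begin{itemize}
\item The order statistics of $\phat'$ coincide with those of $\phat$. Hence every top-$k$ set is unchanged, and so is $\rho(k)=k^{-1}\sum_{i\le k}y_{(i)}$ for every $k$.
\item It follows that the unit-purity spike and the breadth curve are identical before and after recalibration.
\item Harrell's $\cstar$ depends only on pairwise orderings of cases against controls, so it is also invariant.
\end{itemize}
If $g$ is only weakly increasing, as with isotonic regression, it can create ties. I would handle this by noting that ties only merge adjacent blocks. On a merged block, purity becomes the block average, and $\cstar$ counts ties as $1/2$. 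I would then state the claim for strictly monotone $g$ and remark that the tie case changes the spike by at most one block of non-null width. This is the step I expect to be fiddliest. If a tie block mixes a unit-purity head with impure mass, the plateau could be diluted. My first move is to observe that a calibrating $g$ maps a pure set to the value $1$. The pure head would then stay a tied block of purity $1$ and be merged only with other observations at risk $1$, so the plateau survives.

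\textbf{Step (b): destroying dispersion.} Take $g(p)=\E[y\mid \phat=p]$, the population recalibration map. The recalibrated law satisfies $\E[y\mid \phat']=\phat'$, so it is calibrated. Conditional on $\phat'$, $y$ is Bernoulli$(\phat')$, which is exactly the null of \eqref{eq:V}. Therefore $\E_{\text{null}}[V_\eta]=1$ and $V_\eta = 1+O_p(n^{-1/2})$, and the standardized statistic satisfies $z_V\to 0$ in the sense of having a null distribution. A miscalibrated leak thus leaves the first row of Table~\ref{tab:trichotomy}. By step (a), however, its ranking is retained.

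\textbf{Step (c): where the leak lands.} The recalibrated law is calibrated, so by Lemma~\ref{lem:determination-main} it is determined by its score marginal $F'$. I would split into three cases.
\begin{itemize}
\item If the leak produced a non-null unit-purity head, step (a) preserves it, and Lemma~\ref{lem:purity} still certifies leakage.
\item Otherwise, if the leak raised $\cstar$ above $\cstar_{\max}$, then $\cstar$ is unchanged by step (a), so the ceiling trigger of Corollary~\ref{cor:ceiling} still fires.
\item In the remaining case $\cstar\le \cstar_{\max}$ and there is no spike. Lemma~\ref{lem:realizable} gives an honest predictor with law $P(F')$, and Proposition~\ref{prop:impossible} shows the leaky and honest laws coincide. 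Hence every output functional agrees, including $\mathrm{NB}(\tau)$ and $\mathrm{ENB}_\eta$ by Lemma~\ref{lem:determination-main}. This is the sense of ``without consequence at the population level'': the reported decision-relevant quantities equal those of an honest model.
\end{itemize}
The cases are exhaustive, which gives ``cannot exit all three'': the leak is either flagged by a detector or lands in the fourth row, where it is invisible and inconsequential.
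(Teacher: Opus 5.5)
Your steps (a) and (b) follow the paper's own reasoning: rank invariance carries $\cstar$ and the purity head through, and recalibration sets the calibration defect $b\equiv 0$, returning $V_\eta$ to its null law. Your reading of ``$z_V\to 0$'' as ``null distribution'' is the right one.

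The gap is in the last bullet of (c). Your residual case, ``$\cstar\le\cstar_{\max}$ and no spike,'' contains leaks that \emph{did} raise $\cstar$, just not past the supplied ceiling. When no ceiling is supplied, it contains every recalibrated broad leak without a spike, however large its $\Delta\cstar$. You call all of these ``without consequence'' because their law equals that of the honest predictor of Lemma~\ref{lem:realizable}. But that lemma supplies such a predictor for \emph{every} calibrated law, so this criterion cannot separate consequential from inconsequential leaks. The paper also warns explicitly that this predictor is not a leakage-free model of the same scientific problem. Output-indistinguishability is not inconsequentiality.

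The paper's dichotomy is on a different quantity: whether the leak raised $\cstar$ above the honest model's value, with the ceiling set at that value (the in-house setting of Section~\ref{sec:discussion}). If it did, the ceiling trigger fires exactly as before recalibration, since $\cstar$ is rank-invariant. If it did not, ``without consequence'' holds by construction, because reported discrimination is not inflated; Proposition~\ref{prop:impossible} supplies only the indistinguishability half. If the ceiling carries a margin, leaks that improve $\cstar$ by less than the margin land in the fourth row. That such leaks are small is the empirical floor of Section~\ref{sec:results}, not something the proposition proves.

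A smaller point concerns (b). You take $g(p)=\E[y\mid\phat=p]$ and then invoke (a), but a leaky predictor's calibration curve need not be increasing, so this $g$ need not be a monotone recalibration at all. The monotone map that always calibrates is the isotonic (PAV) fit, which is constant on each pooled block with value equal to that block's event rate. It is only weakly increasing, so ties are the generic case rather than an aside, and your tie remark then needs the direction in which $\cstar$ moves. Merge an upper block ($n_u$ observations, $c_u$ cases) with a lower block ($n_l$, $c_l$) that violates the order, $c_l/n_l>c_u/n_u$. This turns $c_u(n_l-c_l)$ concordant and $c_l(n_u-c_u)$ discordant cross pairs into ties, changing the concordance count by $\tfrac12(c_l n_u-c_u n_l)>0$. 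So pooling never lowers $\cstar$, and a ceiling trigger that fired before recalibration still fires. Your observation that a unit-risk block is never pooled with impure mass already handles the spike.
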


\section{Methods}
\label{sec:methods}

\paragraph{Synthetic construction.}
To exercise each regime at controlled severity we generate a calibrated honest risk
$\phat=\sigma(s)$, $s\sim\mathcal N$, with $y\sim\mathrm{Bernoulli}(\phat)$, tuning
the spread to a target $\cstar$ ($n=200{,}000$, $\pi=0.02$, target $\cstar\approx0.84$).
We inject (a) miscalibrated leakage by applying a monotone logit-amplification
$\phat_{\text{mis}}=\sigma(\beta\cdot\mathrm{logit}(\phat))$ with $\beta=2.5$,
which preserves the ranking (hence $\cstar$) exactly while destroying calibration;
(b) broad calibrated leakage as the true posterior under a noisy proxy $z\mid y$,
isotonically recalibrated; (c) deterministic leakage as a calibrated near-label
flag on $0.3\%$ of the cohort. All arms share a common $\cstar\approx0.84$--$0.87$
so discrimination cannot explain any difference. A fifth arm applies Platt scaling \citep{platt1999}
to the miscalibrated predictions to test evadability of the dispersion signal.

\paragraph{UK Biobank cohort.}
We use UK Biobank \citep{sudlow2015} (Application 596880; $n=501{,}883$) with the incident
delirium endpoint (ICD-10 F05.x; prevalence $\approx 0.018$). Leakage is introduced in graded,
clinically interpretable form: eleven comorbidity flags are allowed to be populated
from progressively wider windows of hospital-episode data after baseline
($+1,+2,+3,+4,+5,+7,+10$ years, and full follow-up), so that each window yields an
out-of-fold (OOF) prediction vector with a known, monotone increase in leakage. Base
models are $\ell_1$-penalized logistic regression \citep{tibshirani1996} with 64 features
(53 clinical biomarkers and 11 comorbidity flags), five-fold stratified cross-validation,
and penalty $C=0.002$; predictions are out-of-fold.

\paragraph{Detector and thresholds.}
Detection operates on $(\phat,y)$ only and is independent of model fitting. We report
$\cstar$, $V_\eta$ \eqref{eq:V}, breadth, and spike. The deployable test (Algorithm~\ref{alg:detector}) returns
\textsc{leaky} if a unit-purity head of at least $k_{\min}$ cases and non-null width
is present (Lemma~\ref{lem:purity}), or if $\cstar$ exceeds a supplied
$\cstar_{\max}$ (Corollary~\ref{cor:ceiling}); a dispersion anomaly is reported as a
soft warning; otherwise \textsc{clean}, with the stated scope that smooth
sub-threshold leakage cannot be excluded.

\begin{algorithm}[ht!]
\caption{Blind leakage test: prediction vector $\to$ verdict.}
\label{alg:detector}
\begin{algorithmic}[1]
\Require predictions $\phat[1{:}n]$, outcomes $y[1{:}n]$; optional ceiling $\cstar_{\max}$
\Statex \textbf{Parameters:} $k_{\min}$, $\delta$, $\epsilon$, $z_\alpha$, weight $\eta$
\Ensure verdict $\in\{\textsc{leaky},\textsc{clean}\}$ with a soft dispersion flag
\State sort indices by descending $\phat$
\State $\rho(k)\gets k^{-1}\sum_{i\le k} y_{(i)}$ \Comment{cumulative top-$k$ purity}
\State $\mathrm{spike}\gets \max\{k:\rho(k)\ge 1-\delta\}$ \Comment{unit-purity head width}
\State compute $\cstar$, $V_\eta$, $z_V$ \eqref{eq:V}, and $\mathrm{breadth}$ at excess $\epsilon$
\State $\mathrm{warn}\gets(\lvert z_V\rvert>z_\alpha)$ \Comment{soft, recalibration-evadable}
\If{$\mathrm{spike}\ge k_{\min}$}
  \State \Return $(\textsc{leaky},\,\mathrm{warn})$ \Comment{near-label (Lem.~\ref{lem:purity})}
\EndIf
\If{$\cstar_{\max}$ given \textbf{and} $\cstar>\cstar_{\max}$}
  \State \Return $(\textsc{leaky},\,\mathrm{warn})$ \Comment{broad (Cor.~\ref{cor:ceiling})}
\EndIf
\State \Return $(\textsc{clean},\,\mathrm{warn})$ \Comment{smooth sub-threshold not excluded}
\end{algorithmic}
\end{algorithm}

\paragraph{Choice of $k_{\min}$.}
The Remark following Lemma~\ref{lem:purity} bounds the legitimate probability of a
unit-purity head of width $k$ by $M^{k}$, where $M=\esssup\mu$ is an assumed bound
on the admissible conditional risk. Inverting for a target false-certification rate
$\alpha$ gives
\[
k_{\min}\;\ge\;\lceil\log\alpha/\log M\rceil.
\]
Guaranteeing $\alpha=10^{-3}$ even under a pessimistic $M=0.9$ would require
$k_{\min}\ge 66$; we adopt the lighter $k_{\min}=50$, which yields a
false-certification budget of $0.9^{50}\!\approx\!5\!\times\!10^{-3}$ under that
conservative $M=0.9$ and $\approx\!10^{-15}$ under a typical $M=0.5$. The
remaining thresholds are set to spike purity $\rho\ge 0.95$ (admitting a small
near-deterministic fraction $\delta=0.05$ to avoid sensitivity to single mislabeled
events), breadth excess $\epsilon=0.01$ (one percentage-point separation from the
honest purity curve), and uniform $\eta\equiv1$.

\section{Results}
\label{sec:results}

\subsection{Synthetic validation of the trichotomy}
At matched $\cstar=0.842$, the dispersion statistic cleanly separates miscalibrated
leakage from clean performance ($z_{V}= 291$ vs.\ $-0.5$), while the
calibrated broad proxy is invisible to $V_\eta$ ($z_V\approx0.0$) and has net benefit
equal to clean at every threshold---confirming Proposition~\ref{prop:impossible}---
and is recovered only when an outcome ceiling is supplied. The deterministic arm is
caught by unit purity ($\rho(0.1\%)=1.000$) regardless of calibration. The
monotone-transform construction of the miscalibrated arm guarantees identical $\cstar$
to the clean baseline ($0.842$ for both), so the dispersion signal cannot be
attributed to improved discrimination. Applying Platt scaling to the miscalibrated
arm restores $z_V=-0.1$---indistinguishable from clean---while preserving
$\cstar=0.842$ and the purity profile exactly (Figure~\ref{fig:trichotomy}, dashed
red), confirming that the dispersion signal is evadable by post-hoc recalibration.

\begin{figure}[ht!]
\centering
\includegraphics[width=\textwidth]{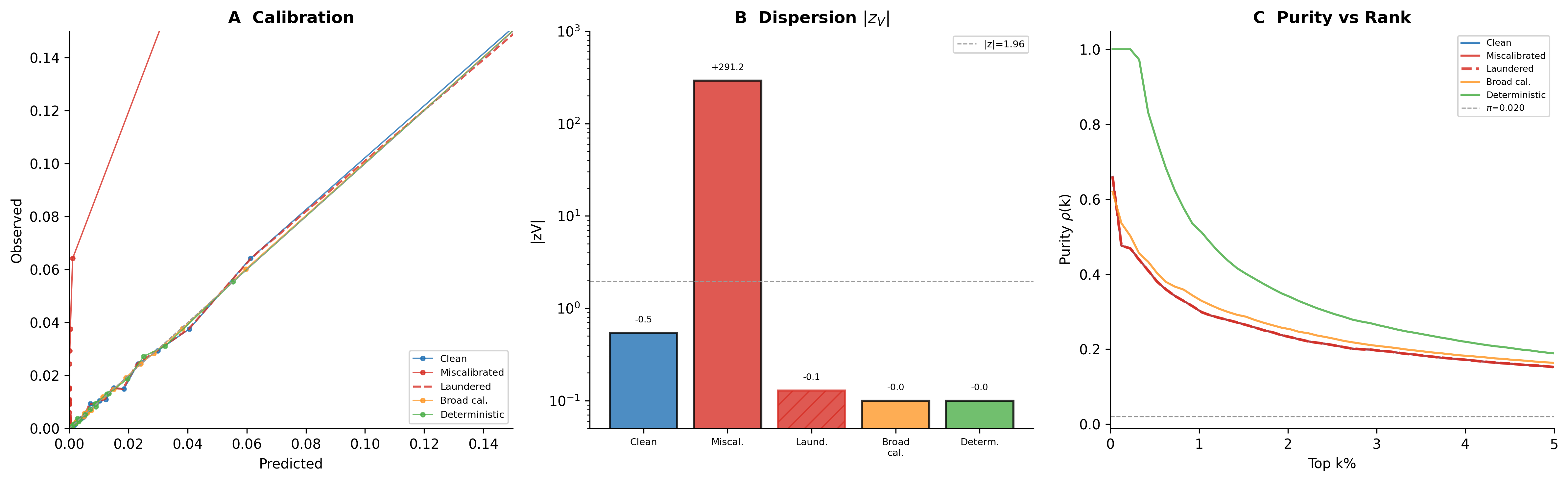}
\caption{Synthetic validation of the trichotomy at matched $\cstar\approx0.84$, $\pi=0.02$.
\textbf{A:}~Calibration catches the miscalibrated arm (red solid) but not the
laundered version (red dashed, Platt-scaled back to the diagonal); the broad calibrated (orange)
and deterministic (green) arms are also indistinguishable from clean.
\textbf{B:}~Dispersion $|z_V|$ on log scale: miscalibrated at $z_V=291$, but Platt scaling
restores $z_V\approx0$---the signal is evadable.
\textbf{C:}~Only the purity head catches the deterministic arm ($\rho=1.0$ at top $0.3\%$);
the laundered arm (dashed) overlaps clean exactly.}
\label{fig:trichotomy}
\end{figure}

\subsection{Graded leakage in UK Biobank and the detection floor}
The numerical floor measured here is specific to incident delirium at
$\pi\approx 0.018$ in UKB under $\ell_1$-penalized logistic regression with the
feature set of Section~\ref{sec:methods}; the structural claim---that the
output-level detection floor coincides with the conclusion-altering floor---is
general and follows from Proposition~\ref{prop:impossible}. With this scope fixed,
breadth rises monotonically with leakage and switches on at a sharp threshold
(Table~\ref{tab:windows}): zero through $+2$ years, then first nonzero at $+3$ years.
The detection floor is $\Delta\cstar \approx 0.007$--$0.008$; below it,
leakage leaves no output signature \emph{and} is too small to materially change
performance. Figure~\ref{fig:oc} shows the operating characteristic.

\begin{table}[ht!]
\centering
\caption{UK Biobank time-windowed leakage (incident delirium, ICD-10 F05.x). Breadth is the top
fraction with purity excess over honest exceeding $0.01$. $\Delta\cstar$ is computed from unrounded $\cstar$.}
\label{tab:windows}
\vspace{0.2cm}
\begin{tabular}{@{}lrrr@{}}
\toprule
Window & $\cstar$ & $\Delta\cstar$ & Breadth \\
\midrule
honest    & 0.7614 & +0.0000 & 0.0\% \\
$+1$y     & 0.7633 & +0.0019 & 0.0\% \\
$+2$y     & 0.7669 & +0.0056 & 0.0\% \\
$+3$y     & 0.7691 & +0.0077 & 1.2\% \\
$+4$y     & 0.7736 & +0.0122 & 1.8\% \\
$+5$y     & 0.7780 & +0.0167 & 4.2\% \\
$+7$y     & 0.7909 & +0.0295 & 11.4\% \\
$+10$y    & 0.8316 & +0.0702 & 22.8\% \\
full leak & 0.9249 & +0.1635 & 37.5\% \\
\end{tabular}
\end{table}

\subsection{The floor is intrinsic, not a regularization artifact}
A control sweeping the penalty across four orders of magnitude
($C\in[5\times10^{-4},1]$) leaves the $+2$y model with zero breadth and
$\Delta\cstar\approx0.005$--$0.006$ throughout: removing the penalty does not unmask a
hidden signal, because none exists. All eleven leaked comorbidity flags carry nonzero
coefficients at $+2$y across all penalty levels, confirming that the leak enters the
predictions; the floor is intrinsic to the information structure, consistent with
Proposition~\ref{prop:impossible}, not an artifact of regularization.

\subsection{The deterministic regime}
Isolating the dementia comorbidity flag as a near-label for incident delirium confirms
the prior-free spike. At full follow-up the flag coincides with the outcome (flagged
fraction $1.78\%\approx\pi$), giving $\cstar=1.000$ and a unit-purity head. The
instructive case is intermediate: at $+5$ years a deterministic head of only $0.16\%$
of the cohort is detected prior-free---a leak that raises $\cstar$ by just $0.024$ and
that a global metric would absorb as ordinary improvement, but that the unit-purity
test isolates. We also note that strong regularization can launder a near-label into
a broad lift, masking the spike; the prior-free guarantee assumes the model was
permitted to express what it learned (Section~\ref{sec:discussion}).

\subsection{False-positive rate on honest models}
\label{sec:fpr}
We apply the deployable test to ten leakage-free constructed models
spanning cardiometabolic, respiratory, neurological, oncologic, and mortality
domains, with no discrimination ceiling supplied so that only the prior-free spike
detector (Lemma~\ref{lem:purity}) is active. The cohort and feature set match
Section~\ref{sec:methods}; predictions are out-of-fold. All ten endpoints return
\textsc{clean}: no unit-purity head of width $\ge k_{\min}=50$ is present at the
$\rho\ge 0.95$ purity threshold in any vector, despite a broad spread of
discrimination ($\cstar$ ranging from 0.63 to 0.87). The prior-free false-positive
rate is $0/10$ (Table~\ref{tab:fpr}).

\begin{table}[ht!]
\centering
\caption{Blind leakage test applied to 10 leakage-free constructed
models spanning cardiometabolic, respiratory, neurological, oncologic, and
mortality domains.
The test operates on out-of-fold predictions with no discrimination ceiling
supplied, so only the prior-free spike detector (Lemma~\ref{lem:purity}) is active.
All 10 endpoints return \textsc{clean}; false positive rate = 0/10.}
\label{tab:fpr}
\vspace{0.2cm}
\begin{tabular}{@{}llrrrrl@{}}
\toprule
Code & Domain & Events & $\cstar$ & Spike@95\% & Breadth & Verdict \\
\midrule
DM2      & Cardiometabolic & 28{,}443 & 0.868 & 0     & ---  & \textsc{clean} \\
CHF      & Cardiometabolic &  8{,}271 & 0.761 & 0     & ---  & \textsc{clean} \\
CKD      & Cardiometabolic &  8{,}438 & 0.752 & 0     & ---  & \textsc{clean} \\
CHD      & Cardiometabolic & 24{,}209 & 0.690 & 0     & ---  & \textsc{clean} \\
COP      & Respiratory     & 10{,}332 & 0.848 & 0     & ---  & \textsc{clean} \\
DEM      & Neurological    &  5{,}568 & 0.770 & 0     & ---  & \textsc{clean} \\
CANlung  & Oncologic       &  4{,}820 & 0.787 & 0     & ---  & \textsc{clean} \\
CANclrc  & Oncologic       &  6{,}786 & 0.630 & 0     & ---  & \textsc{clean} \\
CANprost & Oncologic       & 12{,}081 & 0.658 & 0     & ---  & \textsc{clean} \\
ACM      & Mortality       & 55{,}023 & 0.712 & 0     & ---  & \textsc{clean} \\
\midrule
\multicolumn{6}{@{}l}{False positive rate} & 0/10 \\
\bottomrule
\end{tabular}
\end{table}

\subsection{Deployable test}
Because detection is a functional of $(\phat,y)$ and requires no model fitting, the
full battery runs in $1.3$\,s on the $\approx\!500$k-row cohort on commodity
hardware. On the four synthetic regimes it returns the expected verdicts:
\textsc{clean} for honest and for a strong calibrated model with no ceiling supplied,
\textsc{leaky} for the near-label prior-free, and \textsc{leaky} for the broad model
once a ceiling is supplied. On the honest-baseline panel of
Section~\ref{sec:fpr} the prior-free false-positive rate is $0/10$.

\begin{figure}[ht!]
\centering
\includegraphics[width=0.7\textwidth]{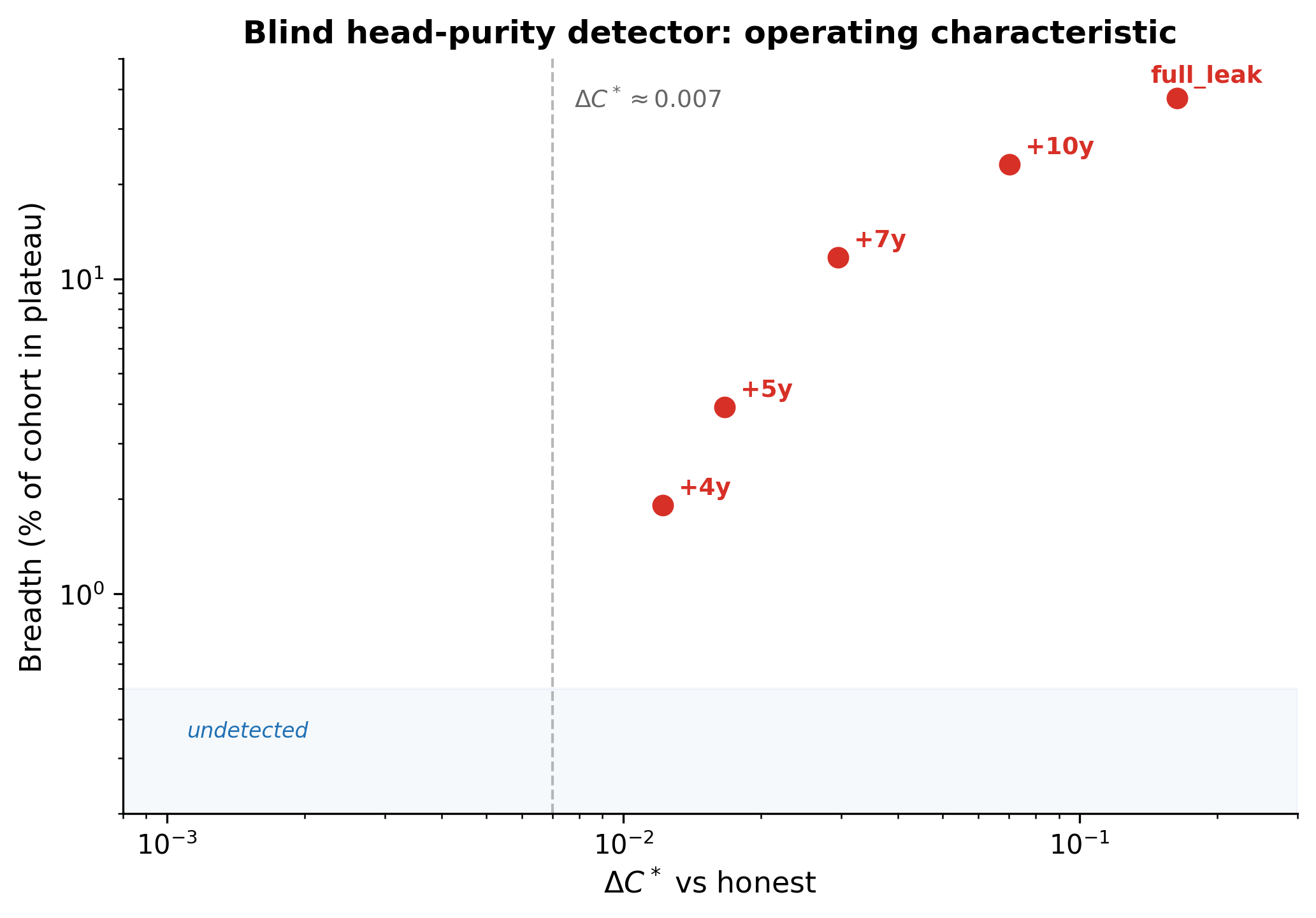}
\caption{Operating characteristic: detection (breadth / purity excess) versus
$\Delta\cstar$, with the floor at $\Delta\cstar\approx0.007$.}
\label{fig:oc}
\end{figure}

\section{Discussion}
\label{sec:discussion}

The detection literature splits into code-level static analysis and expert checklists,
with nothing operating on the model's output. This work supplies that missing layer
and bounds it exactly. The impossibility result is not a weakness to be apologized
for but the spine of the method: it tells a user precisely when each detector is
informative and when it is powerless.

\paragraph{In-house versus reviewer deployment.}
Corollary~\ref{cor:ceiling}'s prior dependence is a constraint for a reviewer handed
a single vector, but evaporates in a setting with honest baselines on record: passing
$\cstar_{\max}$ equal to one's own honest model plus a margin makes the broad trigger
automatic. The reviewer-facing version is the one genuinely bounded by
Proposition~\ref{prop:impossible}.

\paragraph{A standing screen.}
Because the cost of detection is negligible, leakage screening need not be a special
investigation; it can be a default column in a model registry, run on every endpoint
as a matter of course.

\paragraph{Recalibration and the trichotomy.}
Corollary~\ref{cor:closure} shows the trichotomy is closed under monotone
post-processing: a leaker who applies Platt scaling \citep{platt1999} moves from the
miscalibrated regime to the broad calibrated regime, destroying the dispersion
signal but leaving the ranking---and hence $\cstar$ and the unit-purity head---in
place (Figure~\ref{fig:trichotomy}, dashed red). The ceiling and spike triggers
therefore remain operative on recalibrated leakage; only a leak that does not
improve discrimination and is recalibrated is invisible, and is by
Proposition~\ref{prop:impossible} also without consequence.

\section{Limitations}
Smooth, calibrated, sub-threshold leakage is invisible to any output-level test
(Proposition~\ref{prop:impossible}); we prove this rather than work around it, and
show it coincides with the inconsequential regime. The broad trigger depends on an
outcome ceiling. The spike guarantee assumes the outcome is not
prediction-time-deterministic and that the model expresses the leaked information
rather than shrinking it away. The empirical validation uses a single cohort (UK
Biobank) with a specific endpoint and leakage mechanism; generalization to other
data structures (e.g., image-derived features, time-series) remains to be assessed.



\section*{Acknowledgments}

This research used the UK Biobank Resource under Application Number 596880. Data are available to approved researchers via \url{https://www.ukbiobank.ac.uk}~\cite{sudlow2015}. Top 25 ranked protein lists for both endpoints are provided in Supplementary Tables~S1 and S2. We thank all participants and the UK Biobank team for making this resource available.

\appendix
\section{Proofs}
\label{app:proof}

We work with the population law $P$ of $(\phat,y)$ on $[0,1]\times\{0,1\}$ and write
$F$ for its score marginal (the law of $\phat$), so $\pi=\E[y]=\int_0^1 p\,F(dp)$.

\begin{proof}[Proof of Lemma~\ref{lem:determination-main} (Determination)]
Calibration gives $\Prob(y=1\mid\phat=p)=p$ for $F$-a.e.\ $p$, which is the displayed
disintegration; hence $P$ is a measurable image of $F$. The population net benefit is
$\mathrm{NB}(\tau)=\int_{[\tau,1]}\big(p-\tfrac{\tau}{1-\tau}(1-p)\big)F(dp)$ and
$\mathrm{ENB}_\eta=\int_0^1\!\big(pH(p)-(1-p)G(p)\big)F(dp)$, both functionals of $F$.
Under calibration the case and control scores have laws $F_1(dp)=pF(dp)/\pi$ and
$F_0(dp)=(1-p)F(dp)/(1-\pi)$, so
\[
\cstar=\frac{1}{\pi(1-\pi)}\iint_{p>q} p\,(1-q)\,F(dp)\,F(dq)+\tfrac12(\text{ties}),
\]
again a functional of $F$ \citep{degroot1983,schervish1989}.
\end{proof}

\begin{proof}[Proof of Lemma~\ref{lem:realizable} (Realizability)]
Let $X$ be a covariate, defined on a richer space than the sample, with law
$\mathrm{Law}(X)=F$ on $[0,1]$; such a probability space always exists. Generate the
outcome $y$ conditionally on $X$ by $\Prob(y=1\mid X=p)=p$, and take the predictor
$\phat^{\mathrm h}=X$. The predictor is a function of the prediction-time covariate
$X$ alone, hence legitimate. Its induced joint law on $[0,1]\times\{0,1\}$ is the
calibrated law of marginal $F$ by Lemma~\ref{lem:determination-main}: $P_{\mathrm
h}=P(F)$. Moreover $\phat^{\mathrm h}=\Prob(y=1\mid X)$ is the Bayes risk relative to
the $\sigma$-algebra it generates, so among legitimate predictors with prediction-time
$\sigma$-algebra $\sigma(X)$ none has higher concordance.
\end{proof}

\begin{proof}[Proof of Proposition~\ref{prop:impossible}]
\emph{(i) No $(\phat,y)$-test separates equal laws.} Every output-level diagnostic is
a (possibly randomized) statistic of the sample $\{(\phat_i,y_i)\}$, whose sampling
law is determined by $P$. If $P_{\mathrm h}=P_{\mathrm l}$ the two procedures generate
identically distributed samples, so every statistic has the same law under both and
no test, of any size or power, can behave differently on them.

\emph{(ii) Matched leaky and honest laws coincide.} Suppose the leaky procedure is
calibrated with marginal $F$ and concordance $c=\cstar(F)$. By
Lemma~\ref{lem:realizable} there is an honest predictor with the same induced law
$P(F)$ and the same concordance $c$. By part~(i) the two procedures are
output-indistinguishable, although one uses only admissible information and the other
does not.

The mechanism is now explicit: legitimacy is a property of which $\sigma$-algebra
generated the prediction---the information available at prediction time
\citep{kaufman2012}---and is exogenous to $P$. Two procedures may share $P$ yet differ
in legitimacy, and no functional of $P$ can resolve that difference. Finally, a raw
leak that is miscalibrated has $\E[y\mid\phat]\neq\phat$ and is detectable through the
dispersion route (Appendix~\ref{app:dispersion}); but recalibration replaces it by a
calibrated law with some marginal $F'$, which by Lemma~\ref{lem:determination-main}
equals the honest law $P(F')$. Recalibration always maps a leak into the honest
class as a law, giving Corollary~\ref{cor:ceiling}: a calibrated leak whose only
effect is to raise $\cstar$ to $c'$ coincides with the honest model of marginal $F'$
at the same $c'$, so flagging it requires an exogenous ceiling $\cstar_{\max}$.
\end{proof}

For the purity ceiling, fix the sub-$\sigma$-algebra $\mathcal G$ of information
available at prediction time and let $\mu=\Prob(y=1\mid\mathcal G)=\E[y\mid\mathcal G]$
be the prediction-time risk; a predictor is legitimate iff it is $\mathcal
G$-measurable. For a top selection of population mass $u\in(0,1]$, let $\rho^\star(u)$
be the largest achievable top-$u$ event rate over legitimate predictors.

\begin{proof}[Proof of Lemma~\ref{lem:purity}]
For any $\mathcal G$-set $A$ with $\Prob(A)=u$,
\[
\E[y\,\mathbf 1_A]=\E[\E[y\mid\mathcal G]\mathbf 1_A]=\E[\mu\,\mathbf 1_A]
\le \E[\mu\,\mathbf 1\{\mu\ge q_u\}],
\]
where $q_u$ is the upper-$u$ quantile of $\mu$, since among $\mathcal G$-sets of mass
$u$ the integral of $\mu$ is maximized by the upper level set $\{\mu\ge q_u\}$
(Hardy--Littlewood; \citealp{lieb2001}, Ch.~3). Hence the best legitimate top-$u$ purity is
\[
\rho^\star(u)=\frac1u\,\E[\mu\,\mathbf 1\{\mu\ge q_u\}]=\E[\mu\mid\mu\ge q_u].
\]
Suppose the outcome is not prediction-time-deterministic: $\Prob(\mu=1)=0$, i.e.\ no
admissible event of positive mass has $y=1$ almost surely. Then for every $u>0$ the
event $\{\mu\ge q_u\}$ has mass $\ge u>0$ and carries positive mass where $\mu<1$, so
\[
\rho^\star(u)=\E[\mu\mid\mu\ge q_u]<1\qquad\text{for all }u\in(0,1],
\]
and no legitimate predictor attains top-$u$ purity $1$ on a non-null fraction: a
sustained unit-purity head of non-null width $u>0$ is impossible. If additionally the
conditional risk is uniformly bounded away from one, $M:=\esssup\mu<1$, the gap is
uniform, $1-\rho^\star(u)\ge 1-M>0$. Contrapositively, an observed $\rho(u)=1$ over a
non-null fraction forces $\Prob(\mu=1)>0$ on the selected set, i.e.\ the predictor used
information rendering $y$ (near-)deterministic there---information not legitimately
available at prediction time. The only prior invoked is the qualitative
$\Prob(\mu=1)=0$.
\end{proof}

\begin{remark}[Finite-sample $k_{\min}$]
With $n$ samples the empirical top-$k$ purity $\rho(k)=k^{-1}\sum_{i\le k}y_{(i)}$ can
equal $1$ for small $k$ even legitimately, as a chance run of cases at the head. Under
the legitimate optimum the top labels are Bernoulli with means $\le M$, so the
probability that the first $k$ selected are all cases is at most $M^{k}\le
e^{-k(1-M)}$. Requiring width $k\ge k_{\min}$ with $M^{k_{\min}}$ below the target
false-certification level turns the population statement into a finite-sample test;
this is the role of $k_{\min}$ in the deployable detector.
\end{remark}

\section{Dispersion null and finite-size behavior}
\label{app:dispersion}

The statistic $V_\eta$ sits in the dispersion-statistic lineage that begins with
goodness-of-fit tests for logistic regression \citep{hosmer1980} and probability
forecast assessment \citep{spiegelhalter1986}; the threshold-weighting $\eta$ extends
that family to decision-analytic stratifications. Write $a_i=w(\phat_i)^2$ and
$s_i^2=\phat_i(1-\phat_i)$, so
$V_\eta=\big(\sum_i a_i r_i^2\big)\big/\big(\sum_i a_i s_i^2\big)$ with
$r_i=y_i-\phat_i$.

\paragraph{Null mean.} Under the calibrated null $y_i\mid\phat_i\sim\mathrm{Bernoulli}
(\phat_i)$ we have $\E[r_i\mid\phat_i]=0$ and $\E[r_i^2\mid\phat_i]=s_i^2$. The
denominator depends on the scores alone, so
\[
\E_{\text{null}}[V_\eta\mid\phat]
=\frac{\sum_i a_i\,\E[r_i^2\mid\phat_i]}{\sum_i a_i s_i^2}
=\frac{\sum_i a_i s_i^2}{\sum_i a_i s_i^2}=1,
\]
hence $\E_{\text{null}}[V_\eta]=1$ exactly, not merely asymptotically.

\paragraph{Fluctuation scale.} For a Bernoulli$(p)$ residual $r^2\in\{(1-p)^2,p^2\}$,
and a direct computation gives $\E[r^2]=p(1-p)$ and
$\Var(r^2)=p(1-p)(1-2p)^2=s^2(1-2p)^2$. With
$V_\eta-1=\big(\sum_i a_i(r_i^2-s_i^2)\big)/\sum_i a_i s_i^2$, the numerator is a sum of
conditionally independent mean-zero terms with
\[
\Var_{\text{null}}\!\Big(\textstyle\sum_i a_i(r_i^2-s_i^2)\,\Big|\,\phat\Big)
=\sum_i a_i^2 s_i^2(1-2\phat_i)^2=O(n),
\]
while the denominator is $\sum_i a_i s_i^2=O(n)$. Thus
$\Var_{\text{null}}(V_\eta\mid\phat)=O(n)/O(n)^2=O(n^{-1})$ and
$V_\eta=1+O_p(n^{-1/2})$. The summands are bounded, so a Lindeberg CLT gives
\[
\sqrt n\,(V_\eta-1)\Rightarrow\mathcal N(0,\sigma_V^2),\qquad
\sigma_V^2=\lim_{n\to\infty}\frac{n\sum_i a_i^2 s_i^2(1-2\phat_i)^2}
{\big(\sum_i a_i s_i^2\big)^2},
\]
the reference law for the reported $z_V=(V_\eta-1)/\sqrt{\widehat{\Var}}$.

\paragraph{Behavior under miscalibration.} If leakage makes the predictor over- or
under-dispersed, $\E[r_i^2\mid\phat_i]=s_i^2+b(\phat_i)$ with calibration defect
$b\not\equiv0$. Then $\E[V_\eta]=1+\big(\sum_i a_i b(\phat_i)\big)/\sum_i a_i s_i^2$,
an $O(1)$ offset, so $z_V\asymp\sqrt n\to\infty$: the miscalibrated leak is detected, with
power growing in $n$. Recalibration sets $b\equiv0$ and restores
$\E[V_\eta]=1$---which is why this prior-free signal is nonetheless evadable.

\paragraph{Plateau versus smooth decay.} Let $\mu$ be the prediction-time risk of
Appendix~\ref{app:proof} and $\rho^\star(u)=\E[\mu\mid\mu\ge q_u]$ the legitimate
purity profile. On $u\in(0,1]$, $\rho^\star$ is continuous and nonincreasing with
$\rho^\star(1)=\pi$ and, under non-determinism, $\rho^\star(0^+)=\esssup\mu<1$; its
derivative $\tfrac{d}{du}\big(u\rho^\star(u)\big)=q_u$ (the $u$-quantile of $\mu$)
decays smoothly under any continuous risk distribution---the power-law-like lift decay
of an honest ranker. A near-label leak instead pins $\rho(u)\equiv1$ on the leaked
fraction $u\in(0,u_0]$: a flat plateau at the absolute ceiling, of non-null width,
which Appendix~\ref{app:proof} shows no legitimate predictor can produce. The
qualitative contrast---smooth sub-unit decay versus a unit-value plateau of non-null
width---is exactly the prior-free \emph{spike} signature, distinct from the
\emph{breadth} excess against a reference curve that Proposition~\ref{prop:impossible}
shows requires an external $\cstar_{\max}$.

\bibliographystyle{unsrtnat}
\bibliography{refs}

\end{document}